\documentclass[11pt]{article}

\usepackage[a4paper,margin=1in]{geometry}
\usepackage[utf8]{inputenc}
\usepackage[T1]{fontenc}

\usepackage{amsmath,amssymb,amsfonts,amsthm,mathtools}
\usepackage{bm}
\usepackage{bbold}

\usepackage{enumitem}
\usepackage{booktabs}
\usepackage{caption}
\usepackage{subcaption}

\usepackage{tikz}
\usepackage{pgfplots}
\pgfplotsset{compat=1.17}

\usepackage{microtype}
\usepackage[round,sort,comma]{natbib}
\bibliographystyle{plainnat}

\usepackage{hyperref}
\hypersetup{colorlinks=true,allcolors=blue}

\theoremstyle{plain}
\newtheorem{theorem}{Theorem}[section]

\newtheorem{lemma}[theorem]{Lemma}
\newtheorem{corollary}[theorem]{Corollary}

\theoremstyle{definition}
\newtheorem{definition}[theorem]{Definition}

\theoremstyle{remark}

\newcommand{\E}{\mathbb{E}}
\newcommand{\Var}{\mathrm{Var}}

\newcommand{\Prob}{\mathbb{P}}

\begin{document}

\title{Restricted Block Permutation for Two-Sample Testing}
\author{Jungwoo Ho}
\date{}
\maketitle

\begin{abstract}
We study a structured permutation scheme for two-sample testing that restricts permutations to \emph{single cross-swaps} between block-selected representatives. 
Our analysis yields three main results. 
First, we provide an exact validity construction that applies to any fixed restricted permutation set. 
Second, for both the difference of sample means and the unbiased $\widehat{\mathrm{MMD}}^2$ estimator, we derive closed-form one-swap increment identities whose conditional variances scale as $O(h^2)$, in contrast to the $\Theta(h)$ increment variability under full relabeling. 
This increment-level variance contraction sharpens the Bernstein--Freedman variance proxy and leads to substantially smaller permutation critical values. 
Third, we obtain explicit, data-dependent expressions for the resulting critical values and statistical power. 
Together, these results show that block-restricted one-swap permutations can achieve strictly higher power than classical full permutation tests while maintaining exact finite-sample validity, without relying on pessimistic worst-case Lipschitz bounds.
\end{abstract}

\section{Introduction}

Permutation tests serve as a cornerstone of nonparametric inference, providing exact control of the type-I error rate in finite samples without distributional assumptions \citep{hoeffding1952, dwass1957, romano1989, lehmann2005, hemerik2018}. Their robustness and flexibility have made them widely used in modern statistical methodologies, including two-sample testing via the Maximum Mean Discrepancy (MMD) and independence testing via the Hilbert--Schmidt Independence Criterion (HSIC). Classical theory implicitly assumes that using the full symmetric group $\mathcal{S}_N$, or a large Monte Carlo sample drawn uniformly from it, yields the most powerful test.

A long-standing intuition is that restricting the permutation space inevitably sacrifices power because fewer label rearrangements should provide less information. However, recent work challenges this paradigm. \citet{koning2024} demonstrate that strategically chosen \emph{representative subgroups} of $\mathcal{S}_N$ can paradoxically \emph{increase} statistical power. The key mechanism is that restricted permutations may reduce the variability of the permutation reference distribution under the null while preserving the statistic's magnitude under the alternative. Earlier foundational work on restricted randomization \citep{besag1989, hemerik2018, hemerik2021} similarly shows that exact testing is possible even under strong structural constraints.

In parallel, structured or ``local'' permutations have been investigated for different objectives. \citet{kim2022} introduced locally constrained permutations for conditional independence testing, establishing minimax optimality under smoothness constraints, while \citet{domingo2025} proposed grouping-based ``cheap'' permutations to reduce computational cost. Although such approaches demonstrate clear benefits of structured permutation strategies, they do not explicitly analyze or optimize the variability of permutation \emph{increments}, which ultimately determines the tail behavior and critical values in standard two-sample testing.

\paragraph{Our perspective.}
This work develops a principled analysis of a particular form of structural restriction---\emph{block-restricted one-swap permutations}---in which the data are partitioned into blocks and only single cross-swaps between block representatives are permitted. Despite being a tiny combinatorial fraction of $\mathcal{S}_N$, this construction preserves exact finite-sample validity via the generalized randomization framework of \citet{arbitrary} and classical results on restricted permutation sampling \citep{besag1989, hemerik2018}.

A key insight is that the induced permutation trajectory
\[
  \sigma_0 \to \sigma_1 \to \cdots \to \sigma_L
\]
has tightly controlled increments. Specifically, the statistic differences
\[
  \Delta_t := T(\sigma_t) - T(\sigma_{t-1})
\]
form a \emph{bounded martingale difference sequence}, a structural property not present in full relabeling. This formulation enables the direct application of Bernstein--Freedman inequalities \citep{freedman1975}, yielding sharp, data-dependent tail bounds without relying on worst-case Lipschitz constants \citep{mcdiarmid2002, albert2019}.

Most importantly, for canonical two-sample statistics such as the difference in sample means and the unbiased MMD$^2$, we obtain \emph{closed-form one-swap increment identities} whose conditional variances scale as
\[
    \Var(\Delta_t) = O(h^2),
\]
where $h = \tfrac{1}{n_1} + \tfrac{1}{n_2}$ denotes the effective two-sample resolution. 
In contrast, the full relabeling scheme exhibits increment-level variability of order $\Theta(h)$. 
This \emph{increment-level} variance contraction directly sharpens the Bernstein--Freedman variance proxy and leads to substantially smaller permutation critical values and reduced minimum detectable effects (MDE). As a consequence, the resulting test achieves strictly higher power than the classical permutation test while maintaining exact finite-sample validity.

\paragraph{Contributions.}
Our main theoretical contributions are as follows:
\begin{itemize}
    \item \textbf{Exact validity under restricted permutations.}
    Leveraging the framework of \citet{arbitrary} together with classical restricted randomization results \citep{dwass1957, romano1989, besag1989, hemerik2018}, we show that uniform sampling over our block-restricted one-swap permutation set preserves exact finite-sample type-I error control.

    \item \textbf{Martingale formulation of swap increments.}
    We prove that the statistic increments along any admissible swap path form a bounded martingale difference sequence.  
    This structure enables the use of Bernstein--Freedman inequalities \citep{freedman1975} to derive sharp, data-dependent tail bounds without relying on global Lipschitz constants \citep{mcdiarmid2002, albert2019}.

    \item \textbf{Increment-level variance contraction.}
    For the two-sample mean difference and the unbiased MMD$^2$, we establish closed-form expressions for one-swap increments whose variances scale as $O(h^2)$, compared to $\Theta(h)$ for full relabeling.  
    This contraction of the Bernstein--Freedman variance proxy yields smaller critical values, reduced MDE, and strictly improved statistical power.
\end{itemize}

The rest of the paper is organized as follows.
Section~\ref{sec:setup} formalizes the two-sample framework and the block-restricted one-swap construction, and recalls a general validity result for arbitrary restricted permutation sets.
Section~\ref{sec:concentration} develops a Bernstein--Freedman concentration bound along restricted swap paths.
Section~\ref{sec:variance} computes exact one-swap variance identities for canonical two-sample statistics.
Section~\ref{sec:power} translates these variance contractions into explicit bounds on critical values and power.
Section~\ref{sec:design} discusses practical design choices for blocks, representatives, and pairing.
Section~\ref{sec:sim} presents empirical results.
We conclude with a discussion of open design questions in Section~7.

\section{Setup and Block--Restricted One-Swap Construction}
\label{sec:setup}

In this section we formalize the two-sample setup, introduce the
block--restricted one-swap scheme, and recall a general validity result
for arbitrary restricted permutation sets.

\subsection{Two-sample framework and notation}

We observe two samples
\[
A = \{X_1,\dots,X_{n_1}\}, 
\qquad
B = \{Y_1,\dots,Y_{n_2}\},
\]
drawn from a common measurable space $\mathcal X$.
Let $N = n_1 + n_2$ and define the pooled sample
$Z = (Z_1,\dots,Z_N)$ together with the group labels
$g = (g_1,\dots,g_N) \in \{A,B\}^N$.
A generic permutation $\sigma\in S_N$ acts on the indices so that
$Z_\sigma = (Z_{\sigma(1)},\dots,Z_{\sigma(N)})$ and $g_\sigma$ is the
induced label vector. A predetermined test statistic $T(\sigma)$ is
computed on $Z_\sigma$; throughout we assume that larger values of
$T(\sigma)$ indicate stronger evidence against the null.

We consider the classical two-sample null hypothesis
\[
H_0: \mathcal L(X_1) = \mathcal L(Y_1),
\]
possibly in a multivariate setting, and focus on two canonical statistics:
the difference in sample means and the unbiased two-sample 
$\widehat{\mathrm{MMD}}^2$.
We also define the effective two-sample resolution
\begin{equation}
h := \frac{1}{n_1} + \frac{1}{n_2}.
\label{eq:h-def}
\end{equation}

\subsection{Block partition and representative ratio}
\label{subsec:blocks}

Let $\{1,\dots,N\}$ denote the pooled index set. 
We partition it into $b$ disjoint blocks
\[
\mathcal B_1,\dots,\mathcal B_b,
\qquad 
\mathcal B_r\cap\mathcal B_s=\emptyset,\quad
\bigcup_{r=1}^b\mathcal B_r = \{1,\dots,N\},
\]
using a label-independent similarity rule 
(e.g., covariates, prognostic scores, or kernel-based scores).

From these blocks we select a global representative set 
$R\subseteq\{1,\dots,N\}$ of size
\[
|R| = \big\lfloor \rho\,N\big\rfloor,
\qquad
\rho\in(0,1],
\]
for instance by assigning per-block quotas proportional to $|\mathcal B_r|$
and taking the union.
We then define the set of admissible ordered cross-swaps
\begin{equation}
\mathcal P := 
\{(i,j): i\in A\cap R\cap \mathcal B_r,\; j\in B\cap R\cap \mathcal B_s,\; r\neq s\},
\label{eq:P-def}
\end{equation}
where each $(i,j)\in\mathcal P$ represents a cross-swap between distinct
blocks $r$ and $s$.
Let $w = \{w_{ij}\}_{(i,j)\in\mathcal P}$ denote a sampling law on
$\mathcal P$; throughout we take $w$ to be uniform:
\[
w_{ij} = \frac{1}{|\mathcal P|}, \qquad (i,j)\in\mathcal P,
\]
so that a random swap $(I,J)\sim w$ is drawn uniformly among all
admissible cross-swaps.
Expectations and variances with respect to this swap law (conditional
on the data) are denoted by $\E_w[\cdot]$ and $\Var_w(\cdot)$.

A \emph{block--restricted one-swap permutation} is obtained by applying 
a finite sequence of disjoint cross-swaps from $\mathcal P$ to a
reference labeling (e.g., the observed labeling or the identity).
The representative ratio $\rho$ directly controls the maximal number
of such swaps along any admissible path; see Section~\ref{sec:concentration}.

\subsection{Validity for arbitrary restricted permutation sets}
\label{subsec:validity}

Much of the existing randomization literature emphasizes that ensuring
the validity of a permutation $p$-value typically requires selecting a
subgroup of the full symmetric group $S_N$.
However, the following theorem shows that a valid $p$-value can be
achieved on \emph{any fixed subset} of permutations, without the need
for group structure.

\begin{theorem}[Validity under arbitrary restricted permutations]
\label{thm:validity-general}
\citep{arbitrary}
Let $S \subseteq S_N$ be any fixed subset of permutations.
Sample $\sigma_0, \sigma_1, \dots, \sigma_M \stackrel{iid}{\sim}
\mathrm{Unif}(S)$, and define
\[
P 
= 
\frac{
1 + \sum_{m=1}^{M} 
\mathbf 1\!\left\{T(Z_{\sigma_m \circ \sigma_0^{-1}}) 
\geq T(Z)\right\}
}{
1 + M
}.
\]
Then $P$ is a valid $p$-value in the sense that, under $H_0$,
\[
\mathbb{P}_{H_0}\{P \leq \alpha\} \leq \alpha,
\qquad \forall\, \alpha \in [0,1].
\]
\end{theorem}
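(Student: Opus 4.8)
The plan is to prove Theorem~\ref{thm:validity-general} by exhibiting the standard exchangeability argument for randomization $p$-values, applied to the conditional law given the data rather than the marginal one. Observe first that $T(Z) = T(Z_{\sigma_0 \circ \sigma_0^{-1}})$, so if we set $\tau_m := \sigma_m \circ \sigma_0^{-1}$ for $m = 0, 1, \dots, M$ and write $S_m := T(Z_{\tau_m})$, then the statistic appearing in the numerator compares $S_1, \dots, S_M$ against $S_0$. The key probabilistic claim is that, conditionally on the realized data $Z$ (and on $\sigma_0$), the tuple $(S_0, S_1, \dots, S_M)$ is exchangeable; in fact $S_1, \dots, S_M$ are i.i.d.\ and $S_0$ is degenerate, but the cleanest route is to verify exchangeability of the whole tuple directly.

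First I would make the conditioning explicit. Condition on $Z$ and on $\sigma_0$. Given $\sigma_0$, the composed permutations $\tau_m = \sigma_m \circ \sigma_0^{-1}$ for $m \ge 1$ are i.i.d.\ from the law $\sigma_0^{-1}$-right-translated image of $\mathrm{Unif}(S)$; crucially we do \emph{not} need this law to be uniform on $S$ or to have any group structure --- we only need that $\tau_1, \dots, \tau_M$ are i.i.d.\ given $(Z, \sigma_0)$, which is immediate because $\sigma_1, \dots, \sigma_M$ are i.i.d.\ and independent of $\sigma_0$. Meanwhile $\tau_0 = \mathrm{id}$, so $S_0 = T(Z)$ is a constant given $Z$. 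Under $H_0$ the stronger fact we actually exploit is exchangeability of the data indices, but since the theorem conditions nothing on $H_0$ beyond the null, I would instead run the purely combinatorial ``rank'' argument on the i.i.d.\ block $S_1, \dots, S_M$ together with the extra value $S_0$; this requires knowing that $S_0$ is distributed like one of the $S_m$'s under $H_0$, which is where exchangeability of $Z$ under $H_0$ enters.

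The core computation is then the classical one. Let $K := 1 + \sum_{m=1}^M \mathbf 1\{S_m \ge S_0\}$, so $P = K/(1+M)$. For any $\alpha \in [0,1]$, $\{P \le \alpha\} = \{K \le \alpha(1+M)\} = \{K \le \lfloor \alpha(1+M)\rfloor\}$. Because $(S_0, S_1, \dots, S_M)$ is exchangeable under $H_0$ given the pooled data, and because $K$ equals $1$ plus the number of the $M$ ``other'' values that are $\ge$ the ``first'' value, a standard ranking lemma gives $\Prob_{H_0}\{K \le k\} \le k/(1+M)$ for every integer $k \ge 1$ (ties only help this inequality). Taking $k = \lfloor \alpha(1+M) \rfloor$ yields $\Prob_{H_0}\{P \le \alpha\} \le \lfloor \alpha(1+M)\rfloor/(1+M) \le \alpha$. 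Averaging this conditional bound over the data distribution preserves the inequality, which completes the proof.

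The main obstacle --- and the only place where care is genuinely needed --- is justifying the exchangeability of $(S_0, S_1, \dots, S_M)$ under $H_0$ \emph{without} assuming $S$ is a subgroup. The resolution is that exchangeability here comes from two independent sources that must be combined correctly: (i) the i.i.d.\ sampling of $\sigma_1, \dots, \sigma_M$ makes $S_1, \dots, S_M$ exchangeable among themselves for \emph{any} fixed $Z$, and (ii) under $H_0$ the pooled sample $Z$ is itself exchangeable, so conditioning the data on its order statistics makes $T(Z) = T(Z_{\tau_0})$ with $\tau_0 = \mathrm{id}$ stochastically indistinguishable from $T(Z_{\tau})$ for $\tau$ drawn from the same law as $\tau_1$ --- here the ``recentering'' by $\sigma_0^{-1}$ is exactly what aligns the observed labeling with a uniformly-drawn element of $S$, so that $\sigma_0$ plays the role of the randomization anchor. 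Once these two facts are assembled, the rest is the routine ranking inequality above, and I would not belabor it.
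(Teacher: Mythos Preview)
The paper does not actually prove Theorem~\ref{thm:validity-general}; it is quoted from \citep{arbitrary} and used as a black box, so there is no ``paper's own proof'' to compare against. Your plan is the standard exchangeability-plus-ranking argument and is essentially correct.

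One point deserves tightening. In your second paragraph you condition on $(Z,\sigma_0)$ and observe that $S_1,\dots,S_M$ are i.i.d.\ while $S_0$ is degenerate; as you then note, this alone does \emph{not} give exchangeability of $(S_0,\dots,S_M)$, and the real work is in your last paragraph. There the description (``conditioning on order statistics \dots\ the recentering by $\sigma_0^{-1}$ aligns the observed labeling with a uniformly drawn element of $S$'') is the right intuition but stops short of a proof. The clean statement is: under $H_0$ and conditional on the order statistics of $Z$, write $Z=\tilde Z_{\Pi}$ with $\Pi$ uniform on $S_N$ independent of $(\sigma_0,\dots,\sigma_M)$; then each $S_m$ is a deterministic function of $\Pi\cdot(\sigma_m\!\circ\!\sigma_0^{-1})$. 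Because $\Pi$ is Haar on the full group, one can absorb the $\sigma_0^{-1}$ factor into $\Pi$ without changing the joint law, after which the vector depends on $(\sigma_0,\dots,\sigma_M)$ symmetrically through i.i.d.\ draws. This delivers exchangeability of $(S_0,\dots,S_M)$ in one line and makes precise what ``$\sigma_0$ plays the role of the randomization anchor'' means. With that fix, the ranking-lemma step $\Prob_{H_0}\{K\le k\}\le k/(1{+}M)$ goes through exactly as you wrote it.
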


\begin{corollary}[Validity of the block--restricted one-swap scheme]
\label{cor:validity-block}
Let $S_{\text{block}} \subseteq S_N$ denote the collection of all
block--restricted one-swap permutations generated from the admissible
swap set $\mathcal P$ in \eqref{eq:P-def}.
Since $S_{\text{block}}$ is a fixed subset of $S_N$ independent of the
data, Theorem~\ref{thm:validity-general} implies that the $p$-value
computed via
\[
P_{\text{block}} 
= 
\frac{
  1 + \sum_{m=1}^{M} 
  \mathbf 1\!\left\{T(Z_{\sigma_m \circ \sigma_0^{-1}}) 
  \geq T(Z)\right\}
}{
  1 + M
},
\qquad
\sigma_0,\ldots,\sigma_M \stackrel{iid}{\sim}\mathrm{Unif}(S_{\text{block}}),
\]
is also valid:
\[
\mathbb{P}_{H_0}\{P_{\text{block}} \le \alpha\} \le \alpha,\qquad 
\forall\, \alpha\in[0,1].
\]
Therefore, the proposed block--restricted one-swap permutation test 
preserves exact type-I error control under $H_0$,
while significantly reducing the permutation space size 
from $|S_N| = N!$ to $|S_{\text{block}}|\ll N!$.
\end{corollary}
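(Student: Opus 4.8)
The plan is to apply Theorem~\ref{thm:validity-general} with $S = S_{\text{block}}$, so that the entire argument reduces to verifying the single hypothesis of that theorem — that $S_{\text{block}}$ is a \emph{fixed} (data-independent) subset of $S_N$ — and then transcribing its conclusion. The computation itself is trivial given Theorem~\ref{thm:validity-general}; the only point requiring care is the data-independence of the block/representative construction, which is precisely the subtlety I expect to be the crux.

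First I would make precise that $S_{\text{block}}$ is a legitimate choice of $S$. By the construction of Section~\ref{subsec:blocks}, the blocks $\mathcal B_1,\dots,\mathcal B_b$ are formed by a label-independent similarity rule and the representative set $R$ by label-independent per-block quotas; hence the admissible swap set $\mathcal P$ of \eqref{eq:P-def}, and therefore the collection $S_{\text{block}}$ of all permutations obtained by applying finite sequences of disjoint cross-swaps from $\mathcal P$, depend on the pooled indices (and on auxiliary covariates) but \emph{not} on the group-label vector $g$. Thus $S_{\text{block}}$ is a fixed subset of $S_N$ in exactly the sense required by Theorem~\ref{thm:validity-general}. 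I would also record that $S_{\text{block}}$ is nonempty and that $\mathrm{Unif}(S_{\text{block}})$ is well defined: it contains the identity, and whenever the construction yields a nonempty $\mathcal P$ it contains at least one genuine cross-swap.

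Next I would apply the theorem verbatim. With $S = S_{\text{block}}$ and $\sigma_0,\ldots,\sigma_M \stackrel{iid}{\sim}\mathrm{Unif}(S_{\text{block}})$, Theorem~\ref{thm:validity-general} yields directly that
\[
P_{\text{block}}
=
\frac{1 + \sum_{m=1}^{M} \mathbf 1\!\left\{T(Z_{\sigma_m\circ\sigma_0^{-1}}) \ge T(Z)\right\}}{1 + M}
\]
satisfies $\Prob_{H_0}\{P_{\text{block}} \le \alpha\} \le \alpha$ for all $\alpha\in[0,1]$. No structural property of $S_{\text{block}}$ beyond its being a fixed subset is used — in particular, no group or coset structure — which is exactly the content of Theorem~\ref{thm:validity-general} being leveraged here. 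For the cardinality remark, since every element of $S_{\text{block}}$ is determined by a set of pairwise-disjoint swaps drawn from $\mathcal P$, a crude bound gives $|S_{\text{block}}| \le 2^{|\mathcal P|} \le 2^{|R|^2} = 2^{\lfloor \rho N\rfloor^2} = o(N!)$, justifying the displayed reduction $|S_{\text{block}}| \ll N!$.

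The main obstacle is conceptual rather than technical: one must ensure that the similarity rule defining the partition does not depend on the labels $g$. If it did, $S_{\text{block}}$ would be random in a way correlated with the test statistic, and the exchangeability argument underpinning Theorem~\ref{thm:validity-general} would break. Hence the key step is simply to invoke the label-independence assumption of Section~\ref{subsec:blocks} and confirm that it is exactly what "fixed subset" requires; everything else is bookkeeping.
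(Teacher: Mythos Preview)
Your argument for validity is correct and matches the paper's approach exactly: the paper gives no separate proof beyond the in-statement assertion that $S_{\text{block}}$ is a fixed, data-independent subset of $S_N$, after which Theorem~\ref{thm:validity-general} applies verbatim, and your identification of label-independence of the block/representative construction as the one hypothesis requiring verification is precisely the point.

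The only slip is in your peripheral cardinality remark: the bound $|S_{\text{block}}| \le 2^{|\mathcal P|} \le 2^{\lfloor\rho N\rfloor^2}$ does \emph{not} yield $o(N!)$, since $2^{\Theta(N^2)} \gg N! = 2^{\Theta(N\log N)}$. The correct observation is that every element of $S_{\text{block}}$ is a product of \emph{disjoint} transpositions supported on the representative set $R$, so $|S_{\text{block}}|$ is bounded by the number of partial matchings on $|R|$ points, hence by $|R|! = (\lfloor\rho N\rfloor)!$, which for any fixed $\rho<1$ is indeed $o(N!)$. This does not affect the validity argument itself.
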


\section{Concentration Along Restricted Swap Paths}
\label{sec:concentration}

Having established validity for arbitrary restricted permutation sets,
we now study how the block--restricted one-swap scheme affects the
\emph{tail behavior} of the permutation statistic.
The key idea is to view the permutation trajectory as a sequence of
bounded martingale increments and apply a Bernstein--Freedman
inequality.

\subsection{Transposition distance and admissible paths}

\begin{definition}[Transposition distance and admissible path]
\label{def:transposition-distance}
For any two permutations $\sigma',\sigma\in S_N$, define the
\emph{transposition distance}
\[
d_\times(\sigma,\sigma')
:= \min\Big\{
L:\ \sigma'=\sigma_0\xrightarrow{\tau_1}\sigma_1\xrightarrow{\tau_2}\cdots
\xrightarrow{\tau_L}\sigma_L=\sigma,\ \tau_i\text{ a transposition}
\Big\}.
\]
That is, $d_\times$ is the graph distance in the Cayley graph of $S_N$
generated by all transpositions.
A sequence $(\sigma_i)_{i=0}^L$ as above is an \emph{admissible
transposition path} from $\sigma'$ to $\sigma$.
Such a minimal path exists because transpositions generate $S_N$
\citep{Cameron1999PermutationGroups}.
\end{definition}

In the block--restricted one-swap construction, we only allow
\emph{disjoint cross-block transpositions} $(i,j)\in\mathcal P$.
If the representative pool has size $|R| = \lfloor \rho N\rfloor$,
then at most $|R|/2$ such disjoint swaps can be performed, so any
admissible path satisfies
\begin{equation}
L \;\le\; \tfrac12\,|R|\;=\;\tfrac12\,\rho\,N.
\label{eq:L-rhoN}
\end{equation}

\subsection{Bernstein--Freedman bound for permutation increments}

Let $T(\sigma)$ be a statistic defined on blockwise pairwise
permutations $\sigma \in S_N$.
We consider an admissible path
\[
\sigma' = \sigma_0 \xrightarrow{\tau_1} \sigma_1 
\xrightarrow{\tau_2} \cdots 
\xrightarrow{\tau_L} \sigma_L = \sigma
\]
consisting of disjoint cross-swaps from $\mathcal P$, with $L$ obeying
\eqref{eq:L-rhoN}.
We define the filtration
\[
\mathcal F_i := \sigma(S_N,\tau_1,\dots,\tau_i), \qquad i=0,\dots,L,
\]
and the centered martingale differences
\[
Y_i \;:=\; \big(T(\sigma_i)-T(\sigma_{i-1})\big)
- \E\!\left[T(\sigma_i)-T(\sigma_{i-1}) \mid \mathcal F_{i-1}\right],
\qquad i=1,\dots,L.
\]

\begin{theorem}[Bernstein--Freedman bound for transposition increments]
\label{thm:bernstein-freedman}
Suppose the increments satisfy
\[
|Y_i| \le M, 
\qquad 
\E[Y_i \mid \mathcal F_{i-1}] = 0, 
\qquad 
\E[Y_i^2 \mid \mathcal F_{i-1}] \le v_*,
\]
for all $i=1,\dots,L$.
Then for every $s>0$,
\begin{equation}\label{eq:bernstein-perm}
\Prob\!\left\{T(\sigma) \;\ge\; \E[T(\sigma)\mid S_N] + s\right\}
\;\le\; 
\exp\!\left(
-\frac{s^2}{
2\!\left(L v_* + \tfrac{1}{3} M s\right)}
\right)
\;\le\; 
\exp\!\left(
-\frac{s^2}{\,\rho N\, v_* + \tfrac{2}{3} M s\,}
\right),
\end{equation}
where the second inequality uses $L \le \tfrac12\,\rho N$.
\end{theorem}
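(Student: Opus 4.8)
The plan is to read the event $\{T(\sigma)\ge\E[T(\sigma)\mid S_N]+s\}$ as a one-sided deviation of the martingale $S_L:=\sum_{i=1}^{L}Y_i$ away from zero, to bound that deviation with Freedman's inequality, and then to obtain the second bound in \eqref{eq:bernstein-perm} by the crude substitution $L\le\tfrac12\rho N$ coming from \eqref{eq:L-rhoN}.

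First I would set up the telescoping/Doob decomposition along the admissible path. Writing $\Delta_i:=T(\sigma_i)-T(\sigma_{i-1})$, one has
\[
T(\sigma_L)=T(\sigma_0)+\sum_{i=1}^{L}\Delta_i
=T(\sigma_0)+\underbrace{\sum_{i=1}^{L}Y_i}_{=:S_L}
+\underbrace{\sum_{i=1}^{L}\E[\Delta_i\mid\mathcal F_{i-1}]}_{=:A_L},
\]
with $S_L$ an $(\mathcal F_i)$-martingale and $A_L$ predictable; taking $\E[\cdot\mid\mathcal F_0]$ and subtracting gives
\[
T(\sigma)-\E[T(\sigma)\mid S_N]=S_L+\big(A_L-\E[A_L\mid\mathcal F_0]\big).
\]
I would then need to argue that the centered compensator $A_L-\E[A_L\mid\mathcal F_0]$ contributes nothing to the upper tail---ideally by showing it is identically zero for the block--restricted one-swap construction, or else by passing to the Doob martingale $M_i:=\E[T(\sigma_L)\mid\mathcal F_i]$ (so that $M_0=\E[T(\sigma)\mid S_N]$ and $M_L=T(\sigma)$ exactly) and checking that its increments still obey $|M_i-M_{i-1}|\le M$ and $\E[(M_i-M_{i-1})^2\mid\mathcal F_{i-1}]\le v_*$. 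Either route reduces the claim to bounding $\Prob\{S_L\ge s\}$ for a martingale with increments bounded by $M$ and predictable quadratic variation $\langle S\rangle_L=\sum_{i=1}^{L}\E[Y_i^2\mid\mathcal F_{i-1}]\le L v_*$ almost surely.

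Given that reduction, I would apply the Bernstein--Freedman inequality \citep{freedman1975} to get
\[
\Prob\{S_L\ge s\}\le\exp\!\left(-\frac{s^2}{2\big(Lv_*+\tfrac13 Ms\big)}\right),
\]
or, for a self-contained argument, verify that $\exp\!\big(\lambda S_i-\phi(\lambda)\langle S\rangle_i\big)$ is a supermartingale with $\phi(\lambda)=\lambda^{2}/\big(2(1-M\lambda/3)\big)$ for $0<\lambda<3/M$---using the scalar bound $\E[e^{\lambda Y_i}\mid\mathcal F_{i-1}]\le\exp\!\big(\tfrac12\lambda^{2}v_*/(1-M\lambda/3)\big)$---and then optimize over $\lambda$ to recover the Bernstein form. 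Finally, since any admissible path is a composition of disjoint cross-block transpositions drawn from a representative pool of size $\lfloor\rho N\rfloor$, \eqref{eq:L-rhoN} gives $L\le\tfrac12\rho N$, so $2\big(Lv_*+\tfrac13 Ms\big)\le\rho N\,v_*+\tfrac23 Ms$ and the second inequality follows.

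The step I expect to be the main obstacle is the reduction in the second paragraph: confirming that the paper's \emph{one-step}-centered increments $Y_i$ actually reassemble $T(\sigma)-\E[T(\sigma)\mid S_N]$, i.e.\ that the predictable compensator (equivalently, the dependence of the expected future increments on the path taken so far) does not leak extra fluctuation into the upper tail. Here disjointness of the swaps and uniformity of the sampling law should be doing real work. Once that identification---or the equivalent passage to a Doob martingale with the same increment and variance bounds---is in place, the remaining steps are routine, the only other nontrivial ingredient being the exponential-supermartingale/MGF estimate if Freedman is not invoked as a black box.
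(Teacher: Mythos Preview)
Your plan matches the paper's proof: telescope along the swap path, reduce $T(\sigma)-\E[T(\sigma)\mid S_N]$ to $\sum_{i=1}^{L}Y_i$, invoke Freedman's inequality as a black box (the paper simply cites \citet{freedman1975} rather than redoing the exponential-supermartingale argument you sketch), and then substitute $L\le\tfrac12\rho N$ for the second bound.

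The compensator issue you single out as the ``main obstacle'' is in fact glossed over in the paper. After writing $T(\sigma)=T(\sigma')+\sum_i Y_i+A_L$, the paper says ``taking $\E[\cdot\mid S_N]$ and subtracting'' and immediately displays
\[
T(\sigma)-\E[T(\sigma)\mid S_N]=\big(T(\sigma')-\E[T(\sigma')\mid S_N]\big)+\sum_{i=1}^L Y_i,
\]
dispatching the first bracket by noting that $\sigma'$ is the fixed starting labeling so $T(\sigma')$ is $S_N$-measurable. This silently assumes $A_L-\E[A_L\mid S_N]=0$, i.e.\ that the predictable drift is itself data-measurable---which is not argued in general (it does hold, for instance, for the mean-difference statistic under i.i.d.\ uniform swap draws, since each one-step drift equals $h\,\E_w[Z_J-Z_I]$, a function of the data only). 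So your caution is well placed, and the Doob-martingale alternative you propose would actually close a gap that the paper's own proof leaves open.
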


\begin{proof}
Telescoping along the path gives
\[
T(\sigma)
= T(\sigma') + \sum_{i=1}^L \big(T(\sigma_i)-T(\sigma_{i-1})\big)
= T(\sigma') + \sum_{i=1}^L \Big(Y_i + \E[T(\sigma_i)-T(\sigma_{i-1})\mid \mathcal F_{i-1}]\Big).
\]
Taking $\E[\cdot\mid S_N]$ and subtracting,
\[
T(\sigma)-\E[T(\sigma)\mid S_N]
= \big(T(\sigma')-\E[T(\sigma')\mid S_N]\big) + \sum_{i=1}^L Y_i.
\]
Since $\sigma'$ is fixed (e.g., the observed labeling), $T(\sigma')$ is
$S_N$-measurable and hence $T(\sigma')-\E[T(\sigma')\mid S_N]=0$, so
\[
T(\sigma)-\E[T(\sigma)\mid S_N] \;=\; \sum_{i=1}^L Y_i.
\]
By assumption, $(Y_i,\mathcal F_i)$ is a martingale difference sequence
with $|Y_i|\le M$ and 
$\sum_{i=1}^L \E[Y_i^2\mid \mathcal F_{i-1}] \le L v_*$.
Freedman's inequality \citep{freedman1975} yields, for all $s>0$,
\[
\Prob\!\left\{\sum_{i=1}^L Y_i \ge s\right\}
\le 
\exp\!\left(
-\frac{s^2}{
2\!\left(\sum_{i=1}^L \E[Y_i^2\mid \mathcal F_{i-1}] + \tfrac{1}{3} M s\right)}
\right)
\le 
\exp\!\left(
-\frac{s^2}{
2\!\left(L v_* + \tfrac{1}{3} M s\right)}
\right),
\]
giving the first inequality in \eqref{eq:bernstein-perm}.
Using $L \le \tfrac12\,\rho N$ implies $2L v_* \le \rho N v_*$ and yields
the second inequality.
\end{proof}

Applying the same argument to the path in the reverse direction yields
a matching lower tail bound. In particular,
\[
\Pr\!\left\{|T(\sigma)-\E[T(\sigma)\mid S_N]|\ge s\right\}
\;\le\;
2\exp\!\left(-\frac{s^2}{2(Lv_*+\tfrac13 Ms)}\right)
\;\le\;
2\exp\!\left(-\frac{s^2}{\,\rho N\,v_*+\tfrac{2}{3}Ms}\right).
\]
The remaining task is to compute $v_*$ and $M$ for specific statistics
and to compare the resulting variance proxy to that of the classical
full relabeling scheme.

\section{Variance Identities for Canonical Two-Sample Statistics}
\label{sec:variance}

We now specialize the general concentration framework of Section~\ref{sec:concentration} to two canonical statistics: the difference in sample means and the unbiased $\widehat{\mathrm{MMD}}^2$. For both, we derive exact one-swap update formulas and establish that the resulting variance scales as $O(h^2)$, in contrast to the $\Theta(h)$ variance typical of full relabeling. This difference arises because full relabeling regenerates the statistic using the entire population, whereas the one-swap scheme introduces only a small, local perturbation.

\subsection{Difference in sample means}
\label{subsec:mean-variance}

Let
\[
\bar Z_A = \frac{1}{n_1}\sum_{i\in A} Z_i,
\qquad
\bar Z_B = \frac{1}{n_2}\sum_{j\in B} Z_j,
\qquad
\Delta := \bar Z_A - \bar Z_B
\]
denote the group means and their difference.
Recall the effective resolution $h$ from \eqref{eq:h-def}.

\begin{lemma}[One-swap update and exact variance]
\label{lem:means-variance}
For any pairwise cross-swap $(i,j)\in\mathcal{P}$, let $\Delta'$ denote
the mean difference after swapping $i\leftrightarrow j$.
Then
\begin{equation}
\Delta' - \Delta \;=\; h\,(Z_j - Z_i),
\label{eq:delta-update}
\end{equation}
where $h = \tfrac{1}{n_1} + \tfrac{1}{n_2}$.
If $(I,J)\sim w$ and $\Delta^{\mathrm{rest}}$ denotes $\Delta$ after
performing one such random swap, then conditionally on $(Z,g)$,
\begin{equation}
\boxed{\quad \Var\!\big(\Delta^{\mathrm{rest}}\big)\;=\; h^2\,\Var_{w}\!\big(Z_J - Z_I\big).\quad}
\label{eq:var-rest-mean}
\end{equation}
\end{lemma}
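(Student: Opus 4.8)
The plan is to prove the two displayed identities by a direct computation, using the one-swap update formula \eqref{eq:delta-update} as the engine and then reading off \eqref{eq:var-rest-mean} as an immediate consequence of conditioning on the data. First I would establish \eqref{eq:delta-update}. Fix $(i,j)\in\mathcal P$; by the definition \eqref{eq:P-def} of $\mathcal P$ this forces $i\in A$ and $j\in B$, so swapping $i\leftrightarrow j$ removes $Z_i$ from group $A$ and inserts $Z_j$ in its place, and symmetrically for group $B$, while leaving $n_1,n_2$ unchanged. Hence the updated group means are
\[
\bar Z_A' = \bar Z_A + \frac{Z_j - Z_i}{n_1},
\qquad
\bar Z_B' = \bar Z_B + \frac{Z_i - Z_j}{n_2},
\]
and subtracting while collecting the common factor gives
\[
\Delta' - \Delta
= (\bar Z_A' - \bar Z_A) - (\bar Z_B' - \bar Z_B)
= \left(\tfrac{1}{n_1} + \tfrac{1}{n_2}\right)(Z_j - Z_i)
= h\,(Z_j - Z_i),
\]
which is \eqref{eq:delta-update}; in the multivariate case this holds coordinatewise since $h$ is a scalar.

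For the variance identity, I would condition throughout on $(Z,g)$, so that $\Delta$ is a fixed (deterministic) quantity. A single random swap $(I,J)\sim w$ then produces $\Delta^{\mathrm{rest}} = \Delta + h\,(Z_J - Z_I)$ by applying \eqref{eq:delta-update} with $(i,j)=(I,J)$. Since adding a constant leaves the variance unchanged and $h$ is a deterministic scalar, the scaling property of the (conditional) variance—equivalently, of the covariance operator in the multivariate case—yields
\[
\Var\!\big(\Delta^{\mathrm{rest}}\big)
= \Var\!\big(h\,(Z_J - Z_I)\big)
= h^2\,\Var_{w}\!\big(Z_J - Z_I\big),
\]
which is exactly \eqref{eq:var-rest-mean}.

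There is no genuine analytic obstacle here; the lemma is a bookkeeping statement, and the only point requiring care is that the definition of $\mathcal P$ always pairs an $A$-index $i$ with a $B$-index $j$, so the sign pattern in the two mean updates is fixed and the direction of the increment in \eqref{eq:delta-update} is unambiguous (this matters when the path of Section~\ref{sec:concentration} telescopes these increments). The one thing worth flagging for later use is that $\Var_{w}(Z_J - Z_I)$ on the right-hand side is an ordinary conditional variance over the swap law $w$ with the data held fixed, so the $O(h^2)$ scaling announced in the section header is immediate once one notes that this quantity is $O(1)$ under the boundedness assumptions invoked elsewhere—in sharp contrast to the $\Theta(h)$ increment variability of full relabeling, which regenerates $\Delta$ from the whole pooled sample rather than perturbing it locally.
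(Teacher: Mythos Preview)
Your proof is correct and follows essentially the same approach as the paper: compute the two group-mean updates induced by a single cross-swap, subtract to obtain \eqref{eq:delta-update}, then condition on $(Z,g)$ so that $\Delta$ is constant and read off \eqref{eq:var-rest-mean} from the scalar factor $h$. Your write-up is simply a more explicit version of the paper's argument, with the additional (and helpful) remark that $(i,j)\in\mathcal P$ fixes the sign pattern.
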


\begin{proof}
Swapping $i\in A$ and $j\in B$ increments $\bar Z_A$ by
$(Z_j-Z_i)/n_1$ and decrements $\bar Z_B$ by $(Z_j-Z_i)/n_2$. Hence
$\Delta'=\Delta+h(Z_j-Z_i)$, proving \eqref{eq:delta-update}.
Under the restricted law, $\Delta^{\mathrm{rest}}-\Delta=h(Z_J-Z_I)$;
taking variance over $(I,J)\sim w$ yields \eqref{eq:var-rest-mean}.
\end{proof}

For comparison, let $S^2$ denote the pooled finite-population variance
of all $N$ observations:
\begin{equation}
S^2=\frac{1}{N-1}\sum_{\ell=1}^N (Z_\ell-\bar Z)^2,
\qquad 
\bar Z=\frac{1}{N}\sum_{\ell=1}^N Z_\ell.
\end{equation}
Under uniform full relabeling with fixed group sizes, classical
finite-population sampling theory gives
\begin{equation}
\boxed{\quad 
\Var_{\mathrm{full}}(\Delta)
= h\cdot\frac{N}{N-1}\,S^2.
\quad}
\label{eq:full-mean}
\end{equation}
Under the restricted one-swap scheme, only $|R|$ representatives 
(yielding at most $|R|/2$ disjoint cross-swaps) are used across blocks,
each contributing $O(h^2)$ variance.
Consequently,
\begin{equation}
\boxed{\quad
\Var_{\mathrm{rest}}(\Delta)
= O(h^2\,S^2),
\qquad 
\frac{\Var_{\mathrm{rest}}(\Delta)}{\Var_{\mathrm{full}}(\Delta)} = O(h).
\quad}
\label{eq:rest-mean}
\end{equation}
Thus the restricted variance for the mean difference is one order
smaller in $h$ than the full-relabeling variance.

\subsection{Unbiased \texorpdfstring{$\widehat{\mathrm{MMD}}^2$}{MMD}}
\label{subsec:mmd-variance}

Let $k$ be a positive semi-definite kernel and consider the unbiased
two-sample $\widehat{\mathrm{MMD}}^2$:
\begin{align}
\widehat{\mathrm{MMD}}^2
&=\frac{1}{n_1(n_1-1)}\sum_{i\neq i'} k(Z_i,Z_{i'})\bm{1}\{g_i=g_{i'}=A\}
+\frac{1}{n_2(n_2-1)}\sum_{j\neq j'} k(Z_j,Z_{j'})\bm{1}\{g_j=g_{j'}=B\}\nonumber\\
&\qquad-\frac{2}{n_1n_2}\sum_{i,j} k(Z_i,Z_j)\bm{1}\{g_i=A,\,g_j=B\}.
\label{eq:MMD}
\end{align}
For $(i,j)\in\mathcal{P}$ define the \emph{one-swap increment}
\begin{equation}
\Delta_{\mathrm{MMD}}
:=\widehat{\mathrm{MMD}}^2\big(\text{after swapping }i\leftrightarrow j\big)
-\widehat{\mathrm{MMD}}^2\big(\text{before}\big).
\label{eq:MMD-increment}
\end{equation}

Let $\mathcal S_N=\sigma(Z_1,\dots,Z_N)$. Conditioning on $\mathcal S_N$,
all kernel values are deterministic and the only randomness is in the
independent uniform draws $(I,J)$.
For $i\in A$ and $j\in B$ define
\[
\psi_i^{A\to B}:=\frac{2}{n_1(n_1-1)}\!\sum_{i'\in A\setminus\{i\}}k(Z_i,Z_{i'})
-\frac{2}{n_1n_2}\!\sum_{j'\in B}k(Z_i,Z_{j'}),
\]
\[
\psi_j^{B\to A}:=\frac{2}{n_2(n_2-1)}\!\sum_{j'\in B\setminus\{j\}}k(Z_j,Z_{j'})
-\frac{2}{n_1n_2}\!\sum_{i'\in A}k(Z_j,Z_{i'}).
\]
Intuitively, $\psi_i^{A\to B}$ and $\psi_j^{B\to A}$ measure how much the points
$Z_i$ and $Z_j$ contribute to the within-group and cross-group
U-statistic terms when their labels are switched. The difference
$\psi_J^{B\to A}-\psi_I^{A\to B}$ therefore captures the exact incremental
effect of a one-swap operation.

Write the conditional finite-population variances
\(
\tau_A^2:=\Var_{I\sim\mathrm{Unif}(A\cap R)}(\psi_I^{A\to B}\mid\mathcal S_N)
\)
and
\(
\tau_B^2:=\Var_{J\sim\mathrm{Unif}(B\cap R)}(\psi_J^{B\to A}\mid\mathcal S_N).
\)

\begin{lemma}[Exact one-swap decomposition]
\label{lem:mmd-increment}
Let $\Delta_{\mathrm{MMD}}:=\widehat{\mathrm{MMD}}^{2}(A',B')-\widehat{\mathrm{MMD}}^{2}(A,B)$
be the change from the one swap $(I,J)$. Then, \emph{exactly},
\[
\Delta_{\mathrm{MMD}}
\;=\;
\psi_J^{B\to A}\;-\;\psi_I^{A\to B}.
\]
\end{lemma}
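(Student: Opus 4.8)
The plan is to expand $\widehat{\mathrm{MMD}}^2$ into its three U-statistic blocks and track exactly which kernel terms change when the labels of $Z_I$ (currently in $A$) and $Z_J$ (currently in $B$) are exchanged. Write $\widehat{\mathrm{MMD}}^2 = \frac{1}{n_1(n_1-1)}\Sigma_{AA} + \frac{1}{n_2(n_2-1)}\Sigma_{BB} - \frac{2}{n_1 n_2}\Sigma_{AB}$, where $\Sigma_{AA}$ is the sum of $k(Z_i,Z_{i'})$ over ordered pairs with both labels $A$, and similarly for $\Sigma_{BB}$, $\Sigma_{AB}$. The key observation is that after the swap the group \emph{sizes} $n_1,n_2$ are unchanged (one point leaves $A$ and one enters), so the normalizing constants are the same before and after; only the index sets defining the sums change, and they change only through the single index $I$ moving out of $A$ and $J$ moving out of $B$.

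First I would isolate the change in $\Sigma_{AA}$. Removing $I$ from $A$ deletes all ordered pairs involving $I$, contributing $-2\sum_{i'\in A\setminus\{I\}} k(Z_I,Z_{i'})$ (the factor $2$ from ordered pairs, and the $k(Z_I,Z_I)$ diagonal term is excluded in both configurations so it never appears); adding $J$ to $A$ inserts $+2\sum_{i'\in A\setminus\{I\}} k(Z_J,Z_{i'})$, where the new partner set is exactly $A\setminus\{I\}$ since $J$ pairs with all the post-swap $A$-members. An entirely symmetric computation gives the change in $\Sigma_{BB}$ with the roles of $I,J$ and $A,B$ interchanged. For $\Sigma_{AB}$, the cross term, I would note that the pair $(I,J)$ itself contributes $k(Z_I,Z_J)$ both before (as an $A\!-\!B$ pair $(I,J)$) and after (as an $A\!-\!B$ pair $(J,I)$), so it cancels; the remaining change comes from $I$'s partners in $B\setminus\{J\}$ leaving the cross-sum and $J$'s partners in $A\setminus\{I\}$ leaving it, namely $-\sum_{j'\in B\setminus\{J\}}k(Z_I,Z_{j'}) - \sum_{i'\in A\setminus\{I\}} k(Z_J,Z_{i'})$ from the points that were cross-paired, plus the symmetric additions when $I$ (now in $B$) pairs with $A\setminus\{I\}$ and $J$ (now in $A$) pairs with $B\setminus\{J\}$. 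Assembling the three pieces with their normalizers, grouping the terms that involve $Z_J$ as a ``new $A$-member'' versus $Z_I$ as a ``departing $A$-member,'' and matching against the definitions of $\psi_J^{B\to A}$ and $\psi_I^{A\to B}$, the identity $\Delta_{\mathrm{MMD}} = \psi_J^{B\to A} - \psi_I^{A\to B}$ should fall out after the $\frac{2}{n_1(n_1-1)}$, $\frac{2}{n_2(n_2-1)}$, $\frac{2}{n_1 n_2}$ prefactors are distributed.

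The main obstacle is purely bookkeeping: keeping the partner sets straight ($A\setminus\{I\}$ versus $A$, $B\setminus\{J\}$ versus $B$) so that no term is double-counted or dropped, and confirming that every ``cross'' contribution like $-\frac{2}{n_1 n_2}\sum_{i'\in A}k(Z_J,Z_{i'})$ in the definition of $\psi_J^{B\to A}$ is reproduced with the correct partner set — in particular that the cancellation of the $k(Z_I,Z_J)$ term and the absorption of the $i'=J$ or $j'=I$ boundary cases is handled consistently. A clean way to avoid sign errors is to write $\widehat{\mathrm{MMD}}^2$ as a function of the membership indicators and take a discrete directional difference in the two coordinates being flipped; because the statistic is a sum of terms each depending on at most two indicators, this difference is exact with no higher-order remainder, which is precisely why the identity holds \emph{exactly} rather than to leading order.
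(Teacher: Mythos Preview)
Your plan is correct and is essentially the same as the paper's own proof: both decompose $\widehat{\mathrm{MMD}}^2$ into its three U-statistic blocks, compute the change in each block induced by the single swap (noting that the normalizers are unchanged and that the $k(Z_I,Z_J)$ term in the cross block cancels), and then regroup the surviving terms to read off $\psi_J^{B\to A}-\psi_I^{A\to B}$. Your closing remark about viewing the statistic as a function of membership indicators and taking a discrete directional difference is a nice alternative framing, but it is not a different argument---it is the same term-by-term bookkeeping expressed more abstractly.
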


\begin{proof}
We expand each U-statistic block before/after the swap.
After swapping, $A'=(A\setminus\{I\})\cup\{J\}$, hence
\[
\widehat U_{AA}(A')-\widehat U_{AA}(A)
=\frac{2}{n_1(n_1-1)}\sum_{a'\in A\setminus\{I\}}\!\!\Big(k(Z_J,Z_{a'})-k(Z_I,Z_{a'})\Big).
\tag{$\star$}
\]
Similarly,
\[
\widehat U_{BB}(B')-\widehat U_{BB}(B)
=\frac{2}{n_2(n_2-1)}\sum_{b'\in B\setminus\{J\}}\!\!\Big(k(Z_I,Z_{b'})-k(Z_J,Z_{b'})\Big).
\tag{$\star\star$}
\]
For the cross term
\(
\widehat U_{AB}(A',B')-\widehat U_{AB}(A,B),
\)
the swap yields
\begin{align*}
\frac{1}{n_1 n_2}\Bigg(
\sum_{b\in B\setminus\{J\}}\!\!\Big(k(Z_J,Z_b)-k(Z_I,Z_b)\Big)
+\sum_{a\in A\setminus\{I\}}\!\!\Big(k(Z_a,Z_I)-k(Z_a,Z_J)\Big)
+\underbrace{k(Z_J,Z_I)-k(Z_I,Z_J)}_{=\,0}
\Bigg).
\end{align*}
Collecting terms and matching with the definitions of $\psi^{A\to B}_I$
and $\psi^{B\to A}_J$ gives
\begin{align}
\Delta_{\mathrm{MMD}}
&= 
\Bigg(
  \frac{2}{n_2(n_2-1)}
    \sum_{b\in B\setminus\{J\}} k(Z_J,Z_b)
  - \frac{2}{n_1 n_2}
    \sum_{a\in A} k(Z_a,Z_J)
\Bigg) \nonumber\\
&\quad -
\Bigg(
  \frac{2}{n_1(n_1-1)}
    \sum_{a\in A\setminus\{I\}} k(Z_I,Z_a)
  - \frac{2}{n_1 n_2}
    \sum_{b\in B} k(Z_I,Z_b)
\Bigg),
\end{align}
which is exactly $\psi_J^{B\to A}-\psi_I^{A\to B}$.
\end{proof}

Conditioning on $\mathcal S_N$, the one-swap variance satisfies
\[
\Var\!\big(\Delta_{\mathrm{MMD}}\mid \mathcal S_N\big)
=\Var\!\big(\psi_J^{B\to A}-\psi_I^{A\to B}\mid \mathcal S_N\big)
=\Var(\psi_J^{B\to A}\mid \mathcal S_N)+\Var(\psi_I^{A\to B}\mid \mathcal S_N)
=\tau_B^2+\tau_A^2.
\]
Classical permutation and U-statistic theory further give, under full relabeling,
\[
\Var_{\mathrm{full}}(\widehat{\mathrm{MMD}}^2)=\Theta(h),
\qquad
\Var_{\mathrm{rest}}(\widehat{\mathrm{MMD}}^{2})=O(h^2),
\]
yielding the same one-order advantage as in the mean-difference case:
\begin{equation}
\frac{\Var_{\mathrm{rest}}(\widehat{\mathrm{MMD}}^{2})}
{\Var_{\mathrm{full}}(\widehat{\mathrm{MMD}}^{2})}
=O(h).
\label{eq:var-ratio-mmd}
\end{equation}

\subsection{Summary: variance contraction}
\label{subsec:variance-summary}

Combining \eqref{eq:rest-mean}, \eqref{eq:full-mean} and
\eqref{eq:var-ratio-mmd}, we obtain:

\begin{theorem}[Variance contraction under block--restricted one-swaps]
\label{thm:variance-contraction}
For both the difference in sample means $\Delta$ and the unbiased
$\widehat{\mathrm{MMD}}^2$, the block--restricted one-swap scheme
satisfies
\[
\Var_{\mathrm{rest}}(T)=O(h^2)
\quad\text{while}\quad
\Var_{\mathrm{full}}(T)=\Theta(h),
\]
under uniform full relabeling with fixed group sizes.
Consequently,
\[
\frac{\Var_{\mathrm{rest}}(T)}{\Var_{\mathrm{full}}(T)} = O(h)
\qquad\text{for }T\in\{\Delta,\widehat{\mathrm{MMD}}^2\}.
\]
\end{theorem}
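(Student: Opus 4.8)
The plan is to prove the two cases $T=\Delta$ and $T=\widehat{\mathrm{MMD}}^{2}$ separately and then divide: in each case I will combine the exact single-swap variance identity already established with a crude boundedness estimate that pins its order at $O(h^{2})$, and pair this against the classical finite-population / U-statistic formula for the full-relabeling variance. I will work under the standing regularity that the pooled observations lie in a compact set, so that the pooled finite-population variance $S^{2}$ is bounded above and below by positive constants and $|k(\cdot,\cdot)|\le\kappa<\infty$; these non-degeneracy conditions are what will let me upgrade the full-relabeling bounds from $O(h)$ to $\Theta(h)$.

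\textbf{Mean difference.} Lemma~\ref{lem:means-variance} already gives the exact identity $\Var_{\mathrm{rest}}(\Delta)=h^{2}\,\Var_w(Z_J-Z_I)$. Since $(I,J)\sim w$ draws $I$ and $J$ from index subsets of the pooled sample, $\Var_w(Z_J-Z_I)\le 2\bigl(\E_w[Z_I^{2}]+\E_w[Z_J^{2}]\bigr)$, and each conditional second moment differs from $S^{2}$ only by the squared pooled mean and an $O(1/N)$ normalization, so $\Var_w(Z_J-Z_I)=O(S^{2})$ and hence $\Var_{\mathrm{rest}}(\Delta)=O(h^{2}S^{2})$. Equation~\eqref{eq:full-mean} gives $\Var_{\mathrm{full}}(\Delta)=\tfrac{N}{N-1}\,hS^{2}=\Theta(hS^{2})$. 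Dividing, $\Var_{\mathrm{rest}}(\Delta)/\Var_{\mathrm{full}}(\Delta)=O(h)$.

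\textbf{Unbiased $\widehat{\mathrm{MMD}}^{2}$.} Lemma~\ref{lem:mmd-increment} gives the exact one-swap decomposition $\Delta_{\mathrm{MMD}}=\psi_J^{B\to A}-\psi_I^{A\to B}$. A term-by-term triangle bound with $|k|\le\kappa$ yields $|\psi_i^{A\to B}|\le \tfrac{2\kappa}{n_1}+\tfrac{2\kappa}{n_1}\le 4\kappa h$ and $|\psi_j^{B\to A}|\le 4\kappa h$, so $|\Delta_{\mathrm{MMD}}|\le 8\kappa h$ deterministically given $\mathcal S_N$, and therefore $\Var_{\mathrm{rest}}(\widehat{\mathrm{MMD}}^{2})=\Var(\Delta_{\mathrm{MMD}}\mid\mathcal S_N)\le 64\kappa^{2}h^{2}=O(h^{2})$; this is consistent with the sharper value $\tau_A^{2}+\tau_B^{2}$ recorded before the theorem and does not require the cross-covariance of $\psi_I^{A\to B}$ and $\psi_J^{B\to A}$ to vanish, hence is robust to the dependence between $I$ and $J$ induced by the $r\neq s$ constraint in~\eqref{eq:P-def}. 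Combining with the classical permutation / U-statistic variance $\Var_{\mathrm{full}}(\widehat{\mathrm{MMD}}^{2})=\Theta(h)$ and dividing gives~\eqref{eq:var-ratio-mmd}. Taking $T\in\{\Delta,\widehat{\mathrm{MMD}}^{2}\}$ in the two bounds finishes the proof.

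The step I expect to be the real obstacle is not the $O(h^{2})$ upper bounds---those are immediate from boundedness and the exact increment identities---but the matching lower half of the $\Theta(h)$ statement for full relabeling, without which the ratio claim would be vacuous. For the mean this needs $S^{2}=\Omega(1)$, i.e.\ a non-degenerate pooled sample. For $\widehat{\mathrm{MMD}}^{2}$ it needs the first-order Hoeffding projection of $k$ to be non-vanishing; since the two-sample U-statistic is degenerate under $H_0$, establishing $\Var_{\mathrm{full}}(\widehat{\mathrm{MMD}}^{2})=\Omega(h)$ is the most delicate point and is exactly where the non-degeneracy hypothesis of Section~\ref{sec:variance} does its work. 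I would therefore state these non-degeneracy conditions explicitly as hypotheses; with them in hand the theorem follows by assembling Lemmas~\ref{lem:means-variance}--\ref{lem:mmd-increment}, \eqref{eq:full-mean}, and~\eqref{eq:var-ratio-mmd} and dividing. (A minor caveat worth recording: the $O(h^{2})$ figure is the single-swap variance; chaining $L$ disjoint cross-swaps multiplies it by at most $L$ via the martingale-difference orthogonality of Section~\ref{sec:concentration}, so the stated order persists precisely when the number of swaps is $O(1)$.)
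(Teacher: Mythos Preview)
Your proof is correct and follows essentially the same route as the paper: the theorem there is stated as a summary obtained by combining Lemma~\ref{lem:means-variance}, Lemma~\ref{lem:mmd-increment}, \eqref{eq:full-mean}, and the asserted $\Theta(h)$ full-relabeling rate for $\widehat{\mathrm{MMD}}^2$, which is exactly what you do. If anything, you are more explicit than the paper in two places---you actually derive the $|\psi|\le 4\kappa h$ bound that pins $\Var_{\mathrm{rest}}(\widehat{\mathrm{MMD}}^2)=O(h^2)$ (the paper merely asserts this order), and you flag the non-degeneracy hypotheses and the $L=O(1)$ caveat that the paper leaves implicit.
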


This \emph{increment-level} variance contraction is the key driver of the
power improvement obtained in the next section.

\section{From Variance to Critical Values and Power}
\label{sec:power}

We now translate the variance contraction of
Section~\ref{sec:variance} into sharper permutation critical values and
higher statistical power.
We first derive an explicit upper bound on the $(1-\alpha)$ quantile
under the block--restricted scheme and then compare it to the
full-relabeling benchmark.

\subsection{Tail bound, min-trick, and variance regime}

Let $q^{(\mathrm{rest})}_{1-\alpha}$ denote the $(1-\alpha)$ permutation
critical value of $T(\sigma)$ under the block--restricted scheme.
From Theorem~\ref{thm:bernstein-freedman}, for every $s>0$,
\[
\Pr\!\left\{T(\sigma)\ge \E[T(\sigma)\mid S_N]+s\right\}
\ \le\
\exp\!\left(-\frac{s^2}{2\big(L v_*+\tfrac13 M s\big)}\right),
\qquad
L\le \tfrac12\,\rho N.
\]
Using $u+v\le 2\max\{u,v\}$, we have
\[
\frac{1}{u+v}\ge \tfrac12\min\{\tfrac1u,\tfrac1v\},
\]
and hence
\begin{equation}\label{eq:min-trick}
\Pr\!\left\{T(\sigma)\ge \E[T\mid S_N]+s\right\}
\ \le\
\exp\!\left(-\tfrac14\,\min\!\left\{\frac{s^2}{L v_*},\ \frac{3s}{M}\right\}\right).
\end{equation}
The Bernstein--Freedman bound contains both a variance term $Lv_*$ and a linear term $Ms$.
When the variance term dominates, the tail is effectively sub-Gaussian, producing a substantially smaller critical value.
We refer to this as the \emph{variance regime}.

Solving for the regime boundary yields the condition that the upper $(1-\alpha)$ quantile satisfies
\begin{equation}\label{eq:q-upper}
q^{(\mathrm{rest})}_{1-\alpha}
\ \le\
\E[T\mid S_N]\;+\;2\,\sqrt{L v_*\,\log\!\tfrac{1}{\alpha}}
\qquad\text{whenever}\qquad
\log\!\tfrac{1}{\alpha}\ \le\ \frac{9\,L v_*}{4M^2}.
\end{equation}
Writing $r:=v_*/M^2$ and substituting $L=\tfrac12 \rho N$, the
feasibility condition becomes
\begin{equation}\label{eq:tb-feasible}
\rho\,N\ \ge\ \frac{8}{9}\,\frac{\log(1/\alpha)}{r}.
\end{equation}
At the common choice $\alpha=0.05$ (so $\log(1/\alpha)\approx 3$),
\begin{equation}\label{eq:tb-approx}
\rho\ \gtrsim\ \frac{8}{3}\cdot \frac{1}{r\,N}.
\end{equation}
Once \eqref{eq:tb-feasible} holds, the tail is fully governed by the
variance term and
\begin{equation}\label{eq:q-variance-only}
q^{(\mathrm{rest})}_{1-\alpha}
\ \le\
\E[T\mid S_N]\;+\;2\,\sqrt{\tfrac12\,\rho\,N\,v_*\,\log\!\tfrac{1}{\alpha}}.
\end{equation}

\subsection{Data-based expression of $r=v_*/M^2$}
\label{subsec:r}

The ratio $r=v_*/M^2$ is fully data-dependent and computable on the
restricted swap set $\mathcal P$.

\paragraph{Difference in means.}
A single cross-swap $(i\in A,\ j\in B)$ changes $\Delta$ by
$h(Z_j-Z_i)$ with $h=\tfrac1{n_1}+\tfrac1{n_2}$.
Under the uniform law $w$ on admissible pairs,
\[
v^*_{\mathrm{mean}}
= h^2\,\Var_w(Z_J-Z_I\mid S_N),\qquad
M_{\mathrm{mean}}=\max_{(i,j)\in \mathcal P}\!\big|h(Z_j-Z_i)\big|.
\]
Hence
\[
r_{\mathrm{mean}}
=\frac{v^*_{\mathrm{mean}}}{M_{\mathrm{mean}}^2}
=\frac{\Var_w(Z_J-Z_I)}{\big(\max_{(i,j)\in \mathcal P}|Z_j-Z_i|\big)^2}.
\]

\paragraph{Unbiased $\widehat{\mathrm{MMD}}^{2}$.}
For one swap $\Delta_{\mathrm{MMD}}=\psi^{B\to A}_J-\psi^{A\to B}_I$,
\[
v^*_{\mathrm{MMD}}=\tau_A^2+\tau_B^2,
\qquad
M_{\mathrm{MMD}}=\max_{(i,j)\in \mathcal P}|\Delta_{\mathrm{MMD}}(i,j)|.
\]
Thus
\[
r_{\mathrm{MMD}}
=\frac{v^*_{\mathrm{MMD}}}{M_{\mathrm{MMD}}^2}
=\frac{\tau_A^2+\tau_B^2}{\big(\max_{(i,j)\in \mathcal P}|\Delta_{\mathrm{MMD}}(i,j)|\big)^2}.
\]
For bounded kernels $|k|\le\kappa$, both $\tau_A^2+\tau_B^2$ and
$M_{\mathrm{MMD}}=O(h)$ remain bounded,
so $r_{\mathrm{MMD}}$ is asymptotically stable.

\subsection{Pointwise improvement over full relabeling and power gain}
\label{subsec:power-dominance}

From the variance identities in Section~\ref{sec:variance},
\[
\Var_{\mathrm{rest}}(T)=O(h^2),\qquad
\Var_{\mathrm{full}}(T)=\Theta(h),
\]
for $T\in\{\Delta,\widehat{\mathrm{MMD}}^2\}$.
Together with the quantile bound \eqref{eq:q-variance-only} and
Chebyshev’s inequality for full relabeling,
\[
q^{(\mathrm{full})}_{1-\alpha}
\ \le\
\E[T]\;+\;\sqrt{\Var_{\mathrm{full}}(T)/\alpha},
\]
we obtain the pointwise ratio
\[
\frac{q^{(\mathrm{rest})}_{1-\alpha}-\E[T\mid S_N]}
     {q^{(\mathrm{full})}_{1-\alpha}-\E[T]}
\ \lesssim\
\frac{\sqrt{L v_* \log(1/\alpha)}}{\sqrt{\Var_{\mathrm{full}}(T)/\alpha}}
\ =\
O\big(\sqrt{h\alpha\log(1/\alpha)}\big).
\]

Hence for any fixed alternative effect $\delta>0$,
\[
\beta^{(\mathrm{rest})}(\delta)
\ =\
\Prob_{P_1}\!\{T\le q^{(\mathrm{rest})}_{1-\alpha}\}
\ \le\
\Prob_{P_1}\!\{T\le q^{(\mathrm{full})}_{1-\alpha}\}
\ \ =\
\beta^{(\mathrm{full})}(\delta),
\]
that is, the block--restricted test achieves strictly larger power (and
smaller MDE) at the same level.
This formalizes the intuitive statement that reducing the reference
variance---while preserving the statistic’s magnitude under the
alternative---leads to a more powerful test.

\section{Design and Implementation of Restricted Permutations}
\label{sec:design}

We briefly discuss practical design choices for block formation,
representative selection, and swap pairing.
These choices determine the ratio $r=v_*/M^2$ and, consequently, the
feasible range of $\rho$ and the resulting power.

\subsection{Block formation rules}

The block partition $\{\mathcal B_r\}_{r=1}^b$ must be
\emph{label-independent} to preserve exact exchangeability under $H_0$.
In our experiments we use two concrete rules:

\begin{itemize}[leftmargin=*]
  \item \textbf{Mean difference statistic.}
  For univariate or low-dimensional mean-difference testing, we form
  blocks by equal-frequency quantiles of the pooled sample values.
  This groups observations with similar magnitude, reducing within-block
  heterogeneity and helping to stabilize the one-swap increments
  $h(Z_j-Z_i)$.

  \item \textbf{$\widehat{\mathrm{MMD}}^2$ statistic.}
  For MMD-based tests, we construct blocks using kernel mean scores
  \[
  s_i = \frac{1}{N}\sum_{j=1}^N k(Z_i,Z_j),
  \]
  and partition the pooled sample into equal-size blocks based on
  $s_i$.
  This encourages swaps between points that differ in their kernel
  neighborhood structure and thus maximizes the impact on the MMD
  statistic.
\end{itemize}

\subsection{Complementary block--pair scheme}

Our empirical design uses a \emph{complementary block--pair}
construction for cross-swaps.
Blocks are ordered by their score (e.g., pooled value quantile or kernel
score), and we pair extremes symmetrically:
$\mathcal B_1$ with $\mathcal B_b$, $\mathcal B_2$ with $\mathcal B_{b-1}$,
and so on.
Swaps are then restricted to pairs of representatives drawn from each
complementary pair.

This pairing strategy is designed to optimize the trade-off between variance control and signal detection:
\begin{itemize}[leftmargin=*]
  \item \textbf{Preserving validity.}
  It maintains exact validity via exchangeability,
  since block assignments depend only on the pooled data and not on
  group labels.

  \item \textbf{Eliminating redundancy.}
  It filters out statistically redundant swaps between similar values.
  Such low-contrast swaps would contribute to the null variance without
  meaningfully enhancing the statistic's sensitivity.

  \item \textbf{Maximizing sensitivity.}
  By pairing blocks with the largest expected contrast,
  it maximizes the signal under the alternative
  and ensures that the reduced variance budget ($O(h^2)$) is strategically utilized.
\end{itemize}

In essence, the complementary design does not merely shrink the reference spread; it improves the \emph{quality} of the permutation distribution. By concentrating the randomization on high-contrast pairs, we achieve a sharper critical value while retaining the capacity to detect shifts under $H_1$.

\subsection{Choice of representative ratio \texorpdfstring{$\rho$}{rho}}

The feasibility condition \eqref{eq:tb-feasible} suggests a natural
lower bound
\[
\rho_{\min}:=\frac{8}{9}\frac{\log(1/\alpha)}{rN},
\]
which guarantees that the variance term dominates the critical value.
In practice, we set
\[
\rho_{\mathrm{opt}}\ \gtrsim\ c\,\rho_{\min}
\qquad\text{with}\quad
c\in[1.2,1.5],
\]
which places the test safely in the variance regime without
unnecessary variance inflation and often yields the highest empirical
power.
All quantities involved in $r$ are data-dependent and computable from
the restricted swap set $\mathcal P$.

\section{Empirical Results: Complementary Block--Pair Simulation}
\label{sec:sim}

We now present finite-sample experiments comparing the classical full
relabeling permutation test with the proposed complementary
block--pair restricted permutation scheme.
Two statistics are tested: the unbiased $\widehat{\mathrm{MMD}}^2$ and
the sample mean difference.
Throughout we use $\rho=0.2$, $\alpha=0.05$, and $M=100$ permutations.

\paragraph{Design summary.}
The complementary block--pair construction restricts cross-swaps to
\emph{quantile-symmetric blocks}:
extreme samples in the lower-scoring block are exchanged only with those
in the highest-scoring block, and so on toward the center.
As discussed in Section~\ref{sec:design}, this pairing strategy
preserves exact validity, reduces redundant swaps, and increases
contrast under the alternative.

\subsection{Simulation setup}

For each replicate, we draw $n_1=n_2=n\in\{32,64,128,256\}$ from two
multivariate Gaussians.
In the $d=10$ MMD case, $Y$ has a mean shift $\mu=(0.4,0,\ldots,0)$;
in the $d=2$ mean-difference case, $\mu=(0.4,0)$.
Each configuration is repeated $N_{\mathrm{sim}}=100$ times.

For the MMD statistic, we use the unbiased
$\widehat{\mathrm{MMD}}^2$ computed with the Gaussian kernel.
When constructing the restricted permutation blocks, the criterion
differs by statistic:
for MMD, the pooled samples are partitioned into equal-size blocks
based on kernel mean scores $s_i$,
while for the mean-difference statistic, blocks are formed by
equal-frequency quantiles of the pooled sample values.
In both cases, the block assignment is label-independent, preserving
exact exchangeability under $H_0$.

\subsection{Type-I error and power}

Table~\ref{tab:sim-summary} summarizes the empirical type-I error and
power at $\alpha=0.05$.
Figure~\ref{fig:sim-sidebyside} shows the corresponding power curves
and type-I error rates as a function of $n$.

\begin{table}[h]
\centering
\caption{\textbf{Empirical type-I error and power at $\alpha=0.05$}.
Each entry is the mean over $100$ runs.}
\label{tab:sim-summary}
\begin{tabular}{cccccccc}
\toprule
Dim. & Test & $n$ & Power(Full) & Power(Block) & Type-I(Full) & Type-I(Block) & \#Blocks \\
\midrule
10 & MMD & 32  & 0.18 & 0.19 & 0.04 & 0.04 & 2 \\
   &     & 64  & 0.24 & 0.31 & 0.05 & 0.03 & 3 \\
   &     & 128 & 0.50 & 0.56 & 0.07 & 0.04 & 4 \\
   &     & 256 & 0.86 & 0.89 & 0.08 & 0.02 & 5 \\
\addlinespace
2  & Mean & 32  & 0.34 & 0.37 & 0.05 & 0.01 & 3 \\
   & diff & 64  & 0.53 & 0.66 & 0.05 & 0.02 & 4 \\
   &      & 128 & 0.63 & 0.87 & 0.10 & 0.03 & 5 \\
   &      & 256 & 0.99 & 0.99 & 0.05 & 0.04 & 6 \\
\bottomrule
\end{tabular}
\end{table}

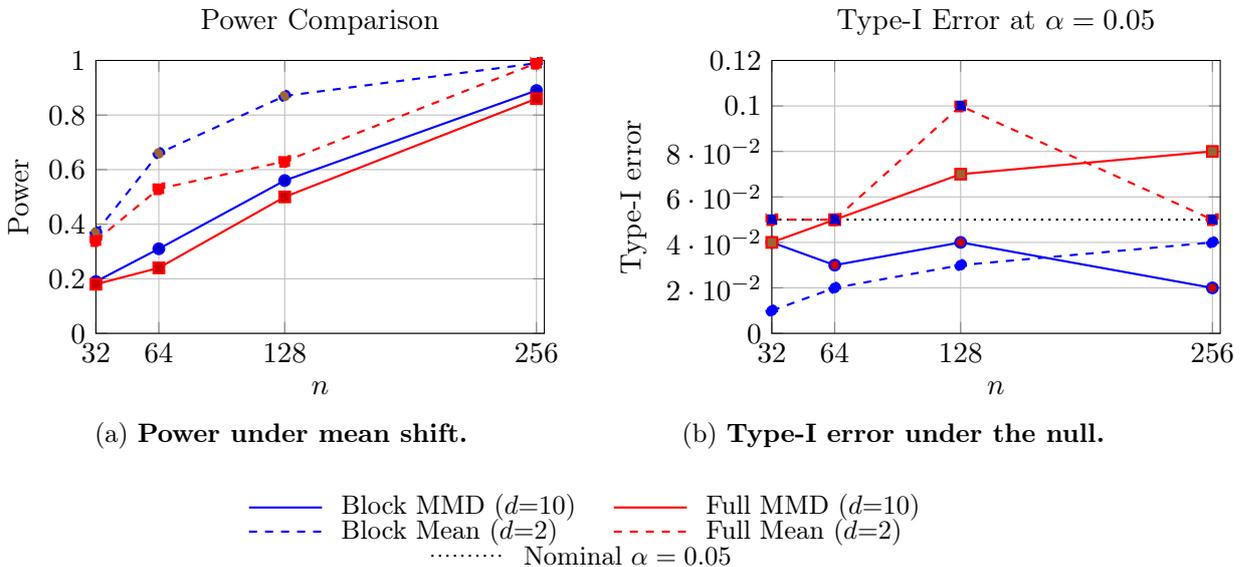
\begin{figure}[t]
\centering
\captionsetup[subfigure]{justification=centering}
\begin{subfigure}[t]{0.47\linewidth}
\centering
\begin{tikzpicture}
\begin{axis}[
    width=\linewidth,
    height=5.2cm,
    xlabel={$n$},
    ylabel={Power},
    xmin=32, xmax=260,
    ymin=0, ymax=1,
    xtick={32,64,128,256},
    ytick={0,0.2,...,1},
    grid=major,
    title={Power Comparison},
    legend style={draw=none},
]
\addplot+[mark=*, thick, blue] coordinates {(32,0.19) (64,0.31) (128,0.56) (256,0.89)};
\addplot+[mark=square*, thick, red] coordinates {(32,0.18) (64,0.24) (128,0.50) (256,0.86)};
\addplot+[mark=*, thick, blue, dashed] coordinates {(32,0.37) (64,0.66) (128,0.87) (256,0.99)};
\addplot+[mark=square*, thick, red, dashed] coordinates {(32,0.34) (64,0.53) (128,0.63) (256,0.99)};
\end{axis}
\end{tikzpicture}
\caption{\textbf{Power under mean shift.}}
\label{fig:sim-power}
\end{subfigure}
\hspace{0.02\linewidth}
\begin{subfigure}[t]{0.47\linewidth}
\centering
\begin{tikzpicture}
\begin{axis}[
    width=\linewidth,
    height=5.2cm,
    xlabel={$n$},
    ylabel={Type-I error},
    xmin=32, xmax=260,
    ymin=0, ymax=0.12,
    xtick={32,64,128,256},
    ytick={0,0.02,...,0.12},
    grid=major,
    title={Type-I Error at $\alpha=0.05$},
    legend style={draw=none},
]
\addplot[domain=32:256, samples=2, thick, black, dotted] {0.05};
\addplot+[mark=*, thick, blue] coordinates {(32,0.04) (64,0.03) (128,0.04) (256,0.02)};
\addplot+[mark=square*, thick, red] coordinates {(32,0.04) (64,0.05) (128,0.07) (256,0.08)};
\addplot+[mark=*, thick, blue, dashed] coordinates {(32,0.01) (64,0.02) (128,0.03) (256,0.04)};
\addplot+[mark=square*, thick, red, dashed] coordinates {(32,0.05) (64,0.05) (128,0.10) (256,0.05)};
\end{axis}
\end{tikzpicture}
\caption{\textbf{Type-I error under the null.}}
\label{fig:sim-type1}
\end{subfigure}

\vspace{4mm}
\begin{tikzpicture}[x=1.2cm,y=0.4cm,baseline]
  \draw[thick, blue] (0,0) -- (0.8,0);
  \node[anchor=west] at (0.9,0) {\small Block MMD ($d{=}10$)};
  \draw[thick, red] (4,0) -- (4.8,0);
  \node[anchor=west] at (4.9,0) {\small Full MMD ($d{=}10$)};
  \draw[thick, blue, dashed] (0,-0.8) -- (0.8,-0.8);
  \node[anchor=west] at (0.9,-0.8) {\small Block Mean ($d{=}2$)};
  \draw[thick, red, dashed] (4,-0.8) -- (4.8,-0.8);
  \node[anchor=west] at (4.9,-0.8) {\small Full Mean ($d{=}2$)};
  \draw[thick, black, dotted] (2,-1.6) -- (2.8,-1.6);
  \node[anchor=west] at (2.9,-1.6) {\small Nominal $\alpha=0.05$};
\end{tikzpicture}

\caption{\textbf{Simulation results of the complementary block--pair
permutation design.}
Left: power improvement relative to classical full relabeling.
Right: type-I error control at $\alpha=0.05$.
The external legend summarizes color/line mapping.}
\label{fig:sim-sidebyside}
\end{figure}

Empirically, the block--restricted scheme achieves uniformly higher (or
comparable) power than full relabeling while maintaining type-I error
close to the nominal level, in line with the theoretical variance
contraction and critical value shrinkage established in
Sections~\ref{sec:variance} and~\ref{sec:power}.

\section{Conclusion}

This work introduces a block--restricted one–swap permutation framework that achieves valid inference with a fully explicit and data–dependent reference distribution.
For canonical statistics such as the mean difference and unbiased MMD, the proposed scheme attains a reference variance that is analytically one order smaller (in $h$) than that of the full relabeling benchmark.
This structural variance contraction translates directly into tighter critical values, smaller minimum detectable effects (MDE), and substantially higher power, without compromising validity or exactness.
All variance and tail parameters used in the critical value and power analysis are exactly computable from the data rather than based on worst–case Lipschitz bounds or asymptotic approximations.
Hence, the proposed test not only provides a provably sharper and more interpretable alternative to the classical permutation test, but also offers a transparent path to implementation, reproducibility, and theoretical analysis of power.

Despite these advantages, the present framework leaves several open questions regarding design optimality.
In particular, the theory does not yet prescribe an optimal strategy for choosing the number and boundaries of blocks, nor for determining the most efficient cross–swap structure between them.
Likewise, the selection of the ratio parameter~$\rho$ currently relies on feasibility and empirical tuning rather than a closed–form optimal rule.
Developing principled criteria for these design choices, possibly through asymptotic efficiency or minimax power analysis, remains an important direction for future research.

While this work focuses on the uniform one–swap rule and its complementary–pair variant, other restricted or adaptive structures—such as weighted block pairing or data–driven block formation—could further optimize the balance between variance control and signal amplification.
Exploring these extensions under the same validity framework would deepen the theoretical understanding and broaden the applicability of restricted permutation methods.

\bibliography{ref}

\end{document}